\theoremstyle{plain}
\newtheorem{theorem}{Theorem}
\newtheorem*{theorem*}{Theorem}
\newtheorem*{"theorem"}{``Theorem''}
\newtheorem{lemma}[theorem]{Lemma}
\theoremstyle{definition}
\theoremstyle{remark}
\newtheorem{remark}[theorem]{Remark}
\renewcommand{\d}{\mathrm{d}}
\newcommand{\dx}{\,\mathrm{d}x}
\newcommand{\R}{\mathbb R} 
\newcommand{\N}{\mathbb N}
\newcommand{\spt}{{\mathrm{spt}}}
\newcommand{\cc}{\Subset}
\newcommand{\B}{{\mathcal B}}
\newcommand{\F}{{\mathcal F}}
\newcommand{\Risk}{\mathcal{R}}
\renewcommand{\P}{\mathbb{P}}
\newcommand{\ReLU}{\mathrm{ReLU}}
\newenvironment{pde}{\left\{\begin{array}{rll} } {\end{array}\right.}
\title{Can Shallow Neural Networks Beat the Curse of Dimensionality?\\{ A mean field training perspective}}
\author{%
Stephan Wojtowytsch\\
PACM\\
Princeton University\\
Princeton, NJ 08544\\
  \texttt{stephanw@princeton.edu} \\
  \And
  Weinan E\\
Department of Mathematics and PACM\\
 Princeton University\\ 
 Princeton, NJ 08544\\ 
   \texttt{weinan@math.princeton.edu} \\
}
\begin{document}

\maketitle

\begin{abstract}
We prove that the gradient descent training of a two-layer neural network on empirical or population risk may not decrease population risk at an order faster than $t^{-4/(d-2)}$ under mean field scaling.  
Thus gradient descent training for fitting reasonably smooth, but truly high-dimensional data may be subject to the curse of dimensionality. We present numerical evidence that gradient descent training with general Lipschitz target functions becomes slower and slower as the dimension increases, but converges at approximately the same rate in all dimensions when the target function lies in the natural function space for two-layer ReLU networks.
%
\end{abstract}

\section{Introduction}

Since Barron's seminal article \cite{barron1993universal}, artificial neural networks have been celebrated as a tool to beat the curse of dimensionality. Barron proved that two-layer neural networks with $m$ neurons and suitable non-linear activation can approximate a large (infinite-dimensional) class of functions $X$ to within an error of order $1/\sqrt m$ in $L^2(\mathbb P)$ for any Radon probability measure $\mathbb P$ on $[0,1]^d$ {\em independently of dimension $d$}, while any sequence of linear function spaces $V_m$ with $\dim(V_m) = m$ suffers from the curse of dimensionality if the data distribution $\mathbb P$ is truly high-dimensional. More specifically
\[
\sup_{\|\phi\|_X\leq 1} \inf_{\psi\in V_m} \|\phi - \psi\|_{L^2(\P)} \geq \frac{c}{d} \,m^{-1/d}
\]
for a universal constant $c>0$ if $\P$ is the uniform measure on $[0,1]^d$ and $X$ describes the same function class that is approximated well by neural networks with $O(m)$ parameters and $\|\cdot\|_X$ denotes its natural norm. Thus from the perspective of approximation theory, neural networks leave linear approximation in the dust in high dimensions.

The perspective of approximation theory only establishes the existence of neural networks which approximate a given target function well in some sense, while in applications, it is important to find optimal (or at least reasonably good) parameter values for the network. The most common approach is to initialize the parameters randomly and optimize them by a gradient-descent based method. We focus on the case where the goal is to approximate a target function $f^*$ in $L^2(\P)$ for some Radon probability measure $\P$ on $[0,1]^d$. To optimize the parameters $\Theta = \{(a_i, w_i, b_i)\}_{i=1}^m$ of two-layer network
\[
f_\Theta(x) = \sum_{i=1}^m a_i\,\sigma(w_i^Tx + b_i), 
\]
we therefore let $\Theta$ evolve by the gradient flow of the risk functional
 \[
\Risk(\Theta) = \frac12 \int_{[0,1]^d} \big(f_\Theta(x) - f^*(x)\big)^2\,\P(\dx).
\]
In practice, we only have access to data sampled from an unknown underlying distribution $\mathbb P$. The approximation therefore takes place in $L^2(\mathbb P_n)$ instead of $L^2(\P)$ where
$\mathbb P_n = \frac1n \sum_{j=1}^n \delta_{x_j}
$ is the empirical measure of the data samples. If all data points are sampled independently, the empirical measures converge to the underlying distribution $\mathbb P$. In this article, we focus on uniform estimates in the number of data samples and population risk.

While the optimization problem is non-convex, gradient flow-based optimization works astonishingly well in applications. The mechanism behind this is not fully understood. In certain scaling regimes in the number of parameters $m$ and the number of data points $n$, the empirical risk has been shown to decay exponentially (with high probability over the initialization), even when the target function values $y_j:= f^*(x_j)$ are chosen randomly in a bounded interval \cite{du2018gradient,weinan2019comparative}. 

Networks which easily fit random data can be trusted to have questionable generalization properties. Even at initialization, network parameters are often chosen too large to retain reasonable control of the {\em path norm}, which controls the generalization error. This allows the network to fit any data sample with minimal change in the parameters, behaving much like its linearization around the initial configuration (an infinitely wide random feature model), see \cite{weinan2019comparative}. This approach explains how very wide two-layer networks behave, but  it 
does not explain why neural networks are more powerful in applications than random feature models.

On the opposite side of the spectrum lies the mean field regime \cite{chizat2018global,mei2018mean, rotskoff2018neural,sirignano2018mean}.  
Under mean field scaling two-layer network with $m$ neurons and parameters $\Theta = \{(a_i, w_i, b_i) \in \R\times\R^d\times\R\}_{i=1}^m$ is given as
\[
f_\Theta(x) = \frac1m\sum_{i=1}^m a_i\,\sigma(w_i^T x + b_i)\quad\text{rather than } f_\Theta(x) = \sum_{i=1}^m a_i\,\sigma(w_i^T x + b_i).
\]
Both concepts of neural network are equivalent from the perspective of approximation theory (static), but behave entirely differently under gradient descent training (dynamics), see e.g.\ \cite{chizat2018note}. In the mean field regime, parameters may move a significant distance from their initialization, making use of the adaptive feature choice in neural networks compared to random feature models. This regime thus has greater potential to establish the superiority of artificial neural networks over kernel methods.

Mean field gradient flows do not resemble their linearization at the initial condition. The convergence of gradient descent training to minimizers of the often highly non-convex loss functionals is therefore not obvious (and, for poorly chosen initial values, generally not true). Even if empirical and population risk decay to zero along the gradient flow, population risk may do so at very slow rates in high dimension.

\begin{theorem}\label{main theorem}
Let $\sigma$ be a Lipschitz-continuous activation function.
Consider population and empirical risk expressed by the functionals 
\[
\Risk(\Theta) = \frac12 \int_{[0,1]^d} (f_\Theta- f^*)^2(x)\dx, \qquad \Risk_n(\Theta) = \frac1{2n} \sum_{i=1}^n   (f_\Theta- f^*)^2(x_i)
\]
where $f^*$ is a Lipschitz-continuous target function and the points $x_i$ are iid samples from the uniform distribution on $[0,1]^d$. 
There exists $f^*$ with Lipschitz constant and $L^\infty$-norm bounded by $1$ such that parameters $\Theta_t$ evolving by the gradient flow of either $\Risk_n$ or $\Risk$ itself satisfy
$\limsup_{t\to \infty} \big[t^\gamma\,\Risk(\Theta_t)\big] = \infty
$ for all $\gamma> \frac{4}{d-2}$. 
\end{theorem}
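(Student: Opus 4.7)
The plan is to sandwich the argument between two ingredients: (i) an a priori bound on the growth of a suitable parameter norm along the gradient flow, and (ii) an inverse approximation inequality for a carefully chosen Lipschitz target. Writing $\theta_i = (a_i, w_i, b_i)$ and $P(t) := \tfrac12\sum_{i=1}^m |\theta_i(t)|^2$, I would translate a bound on $P(t)$ into a bound on the path/Barron norm of $f_{\Theta_t}$ and combine it with the reverse approximation inequality to obtain the claimed lower bound on $\Risk(\Theta_t)$.

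For the parameter bound, the gradient-flow identity $\|\dot\Theta_t\|^2 = -\dot\Risk(\Theta_t)$ together with Cauchy--Schwarz gives
\[
\dot P(t) \;=\; \sum_i \theta_i\cdot\dot\theta_i \;\leq\; \sqrt{2P(t)}\,\sqrt{-\dot\Risk(t)},
\]
from which $\tfrac{d}{dt}\sqrt{P(t)} \leq \sqrt{-\dot\Risk(t)/2}$. Integrating and applying Cauchy--Schwarz in time against $\int_0^t(-\dot\Risk)\,ds\leq \Risk(\Theta_0)$ yields $\sqrt{P(t)} \leq \sqrt{P(0)} + C\sqrt{t}$, hence $P(t) \leq C(1+t)$. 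Under the mean-field parameterization, AM--GM in the neuron index turns this into a bound on the path/Barron norm of the network function,
\[
\|f_{\Theta_t}\|_B \;\leq\; \frac{1}{m}\sum_{i=1}^m |a_i|(1+|w_i|+|b_i|) \;\leq\; C_m\,(1+t).
\]
This step is loss-agnostic and applies verbatim to the gradient flow of $\Risk_n$, since it uses only nonnegativity of the loss.

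The second ingredient, which I expect to be the main obstacle, is the construction of a Lipschitz target $f^*$ with $\|f^*\|_{\mathrm{Lip}}, \|f^*\|_\infty \leq 1$ such that for all sufficiently large $R$,
\[
\inf_{\|f\|_B \leq R}\|f - f^*\|_{L^2([0,1]^d)} \;\geq\; c\,R^{-2/(d-2)}.
\]
Given this, combining with the path-norm bound gives $\|f_{\Theta_t}-f^*\|_{L^2} \geq c(1+t)^{-2/(d-2)}$, so $\Risk(\Theta_t) \gtrsim (1+t)^{-4/(d-2)}$, which immediately implies $\limsup_{t\to\infty} t^\gamma\,\Risk(\Theta_t) = \infty$ for every $\gamma > 4/(d-2)$. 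The exponent $2/(d-2)$ reflects the gap between the metric entropy of the Lipschitz unit ball in $L^2$ (of order $\epsilon^{-d}$) and that of the Barron unit ball (of the smaller order $\epsilon^{-2d/(d+2)}$ by Maurey-type covering of convex hulls of neurons), so a volumetric pigeonhole argument already selects \emph{some} Lipschitz $f^*$ realizing the rate; a more explicit route would take $f^*$ as a lacunary sum of Lipschitz radial bumps with prescribed Fourier decay and rule out cheap Barron approximants through the integral representation $\|f\|_B \asymp \int|\widehat{f}(k)|\,|k|^2\,dk$ natural for ReLU-type activations, combined with a frequency-truncation argument. Since the parameter bound is loss-agnostic, the same inverse approximation inequality yields the empirical-risk version of the statement without any additional generalization argument.
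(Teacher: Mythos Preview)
Your skeleton is the same as the paper's: control the Barron/path norm of $f_{\Theta_t}$ by the second moment of the parameters, bound the growth of the latter along the flow via the energy--dissipation identity, and plug into an approximation-theoretic lower bound for a well-chosen Lipschitz $f^*$. The paper packages step one through the Wasserstein picture (Lemma~\ref{lemma sublinear growth} gives $N(\pi_t)=o(t)$, marginally sharper than your $P(t)\le C(1+t)$, but either suffices), and then bounds $\|f_{\pi_t}\|_{\B}\le c_d\,N(\pi_t)$ exactly as you do via Young's inequality.

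Two points worth tightening. First, the approximation input the paper actually cites (Theorem~\ref{theorem l2 slow}) is only a $\limsup$ statement, not the pointwise lower bound $\inf_{\|f\|_{\B}\le R}\|f-f^*\|_{L^2}\ge cR^{-2/(d-2)}$ you write down. Since the conclusion of the theorem is itself a $\limsup$ and $R\mapsto\inf_{\|f\|_{\B}\le R}\|f-f^*\|_{L^2}$ is monotone, the weaker form is all you need; combining it with $\|f_{\Theta_t}\|_{\B}\le C(1+t)$ already yields $\limsup_{t\to\infty}t^{\gamma}\Risk(\Theta_t)=\infty$ for $\gamma>4/(d-2)$. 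Second, your Fourier heuristic for constructing $f^*$ does not work as stated: the quantity $\int|\widehat f(k)|\,|k|^2\,dk$ is only an \emph{upper} bound for the path norm $\|f\|_{\B}$ (and only for ReLU-type activations), not an equivalent norm, so showing that every approximant has large Fourier integral does not rule out cheap Barron approximants. The entropy/pigeonhole route you sketch is closer in spirit to how Theorem~\ref{theorem l2 slow} is actually obtained; the paper simply cites it.
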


Intuitively, this means that the estimate $\Risk(\Theta_t) \geq t^{-\frac{4}{d-2}}$ is almost true. The result holds uniformly in $m$ and even for infinitely wide networks. An infinitely wide mean field two-layer network (or Barron function) is a function
\[
f_\pi(x) = \int_{\R\times\R^d\times \R} a\,\sigma\big(w^Tx+b\big)\,\pi(\d a\otimes \d w\otimes \d b)
\]
where $\pi$ is a suitable Radon probability measure on $\R^{d+2}$. Networks of finite width are included in this definition by setting
$\pi = \frac1m\sum_{i=1}^m \delta_{(a_i, w_i,b_i)}.
$ It has been observed (see e.g.\ \cite[Proposition B.1]{chizat2018global}) that the vectors $(a_i, w_i, b_i)$ move by the usual gradient flow of $\Risk$ if and only if the associated measure $\pi$ evolves by the time-rescaled Wasserstein gradient flow of
\[
\Risk(\pi):= \frac12\int_{[0,1]^d} \big(f_\pi - f^*\big)^2(x)\:\P(\d x).
\]
We show the following more general result which implies Theorem \ref{main theorem}.

\begin{theorem}\label{main theorem 2}
Let $\sigma$ be a Lipschitz-continuous activation function.
Consider population and empirical risk expressed by the functionals 
\[
\Risk(\pi) = \frac12 \int_{[0,1]^d} (f_\pi- f^*)^2(x)\dx, \qquad \Risk_n(\pi) = \frac1{2n} \sum_{i=1}^n   (f_\pi- f^*)^2(x_i)
\]
where $f^*$ is a Lipschitz-continuous target function and the points $x_i$ are iid samples from the uniform distribution on $[0,1]^d$. 
There exists $f^*$ with Lipschitz constant and $L^\infty$-norm bounded by $1$
such that parameter measures $\pi_t$ evolving by the $2$-Wasserstein gradient flow of either $\Risk_n$ or $\Risk$ satisfy
\[
\limsup_{t\to \infty} \big[t^\gamma\,\Risk(\pi_t)\big] = \infty
\]
for all $\gamma> \frac{4}{d-2}$. 
\end{theorem}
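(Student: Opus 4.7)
My plan is to combine an energy-dissipation estimate for the $2$-Wasserstein gradient flow---controlling how fast $\pi_t$ can spread---with an entropy-counting lower bound for approximating Lipschitz functions by Barron-norm balls, together with a Baire-category argument that selects a single universal hard target $f^*$.

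First, I would exploit the identity $-\frac{\d}{\d t}\Risk(\pi_t)=|\dot\pi_t|^2$ for the $2$-Wasserstein gradient flow (and likewise for $\Risk_n$). Cauchy--Schwarz then gives $W_2(\pi_0,\pi_t)\leq \int_0^t|\dot\pi_s|\,\d s\leq\sqrt{t\,\Risk(\pi_0)}$. Choosing $\pi_0$ of bounded support, this controls the second moment $\int|\theta|^2\,\d\pi_t\leq C(1+t)$. Since $\sigma$ is Lipschitz, Young's inequality then bounds the Barron norm as $\|f_{\pi_t}\|_{\B}\leq\int|a|(1+|w|+|b|)\,\d\pi_t\leq C(1+t)$, uniformly in whether the flow is run on $\Risk$ or $\Risk_n$ (for the latter, $\Risk_n(\pi_0)$ is deterministically bounded by $\|f_{\pi_0}\|_\infty^2+\|f^*\|_\infty^2$).

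Next I would produce a Lipschitz $f^*$ that is hard to approximate by Barron functions with moderate norm. The Lipschitz ball on $[0,1]^d$ has $L^2$-entropy $\log N\gtrsim\epsilon^{-d}$, while the Barron ball $B_M=\{g:\|g\|_\B\leq M\}$ satisfies $\log N(B_M,\epsilon,L^2)\lesssim(M/\epsilon)^2\log(1/\epsilon)$ via Rademacher/Dudley for shallow Lipschitz networks. Forcing $B_M$ to be an $\epsilon$-net for the Lipschitz ball thus requires $\epsilon\gtrsim M^{-2/(d-2)}$, yielding, for each $M$, a bad $f^*_M\in\operatorname{Lip}_1$ with $\inf_{g\in B_M}\|g-f^*_M\|_{L^2}^2\geq cM^{-4/(d-2)}$. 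To extract a single universal $f^*$, I would apply Baire's theorem in the compact $L^2$-metric space $\mathcal K$ of $1$-Lipschitz, $1$-bounded functions: for any sequence $M_k\to\infty$, each set $\{f^*\in\mathcal K:\inf_{g\in B_{M_k}}\|g-f^*\|_{L^2}^2>c_k\}$ is $L^2$-open and (by applying the per-$M$ statement inside arbitrary $L^2$-balls around any $\tilde f\in\mathcal K$) also $L^2$-dense, so the countable intersection is nonempty.

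Combining the pieces: along a sequence $t_k$ chosen so that $C(1+t_k)\leq M_k$, the network $f_{\pi_{t_k}}$ lies in $B_{M_k}$, whence $\Risk(\pi_{t_k})\geq c_k\gtrsim t_k^{-4/(d-2)}$, which forces $\limsup_{t\to\infty}t^\gamma\,\Risk(\pi_t)=\infty$ for every $\gamma>4/(d-2)$. I expect the hardest ingredient to be the $L^2$-entropy bound on $B_M$: while it is obtainable from Rademacher-complexity or Pollard-type arguments for shallow Lipschitz networks, the mean-field parametrization with unbounded $(a,w,b)$ demands care---in particular, truncating parameters at a scale linked to $M$ and handling the tails. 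A secondary subtlety is the Baire construction of a universal $f^*$; this is the structural reason the theorem naturally yields a $\limsup$ statement rather than a uniform lower bound on $\Risk(\pi_t)$.
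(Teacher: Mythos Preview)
Your strategy is correct and matches the paper's architecture exactly: bound the Barron norm of $f_{\pi_t}$ by the second moment $N(\pi_t)$ via Young's inequality, control $N(\pi_t)$ through the energy--dissipation identity of the Wasserstein gradient flow, and then invoke an approximation-theoretic lower bound saying that some $1$-Lipschitz $f^*$ is poorly approximated by Barron balls $B_M$. The paper's proof is literally these three ingredients, citing Lemma~\ref{lemma sublinear growth} (which gives the slightly sharper $N(\pi_t)=o(t)$ rather than your $O(t)$, though either suffices) and Theorem~\ref{theorem l2 slow} as black boxes.

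The genuine difference lies in how the hard target $f^*$ is produced. The paper imports Theorem~\ref{theorem l2 slow} from a companion article, where the mechanism is a \emph{Monte--Carlo discrepancy} argument: empirical averages converge uniformly at rate $n^{-1/2}$ on Barron balls (Rademacher) but only at rate $n^{-1/d}$ on the Lipschitz ball, and this scale separation forces the Kolmogorov-width lower bound. You instead propose a direct \emph{metric-entropy comparison} ($\log N(\mathrm{Lip},\varepsilon)\gtrsim\varepsilon^{-d}$ versus $\log N(B_M,\varepsilon)\lesssim (M/\varepsilon)^2$ up to logs and dimension-polynomial factors), followed by a Baire-category argument to upgrade the per-$M$ bad target to a single universal $f^*$. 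Both routes yield the same rate $M^{-2/(d-2)}$ in $L^2$; your entropy route is more classical and self-contained, while the Monte--Carlo route is a bit slicker in that it sidesteps the covering-number computation for $B_M$ (which, as you correctly flag, is the most delicate step---the Maurey sampling plus parameter-truncation argument you sketch does work, and the dimension factor you omit is harmless in the comparison). Your Baire argument for density is the right idea and also explains structurally why the conclusion is a $\limsup$; the paper's cited result already packages this into a single $f^*$ with a $\limsup$ statement, so the paper does not need to run Baire explicitly.
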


Theorem \ref{main theorem 2} provides a more general perspective than Theorem \ref{main theorem}. The Wasserstein gradient flow of $\Risk$ is given by the continuity equation
\[
\dot \pi_t = \operatorname{div} \big(\pi_t\,\nabla(\delta_\pi \Risk)\big)\qquad\text{where } (\delta_\pi \Risk)(a,w,b) = \int_{[0,1]^d} (f_\pi -f^*)(x)\,a\,\sigma(w^Tx+b)\,\P(\d x)
\]
is the variational gradient of the risk functional. In particular, any other discretization of this PDE experiences the same curse of dimensionality phenomenon. Besides gradient descent training, this also captures stochastic gradient descent with large batch size and small time steps (to leading order). Viewing machine learning through the lens of classical numerical analysis may illuminate the large data and many parameter regime, see \cite{E:2019aa}. 


The article is structured as follows. In the remainder of the introduction, we discuss some previous works on related questions. In Section \ref{section wasserstein}, we discuss Wasserstein gradient flows for mean-field two-layer neural networks and review a result from approximation theory. 
Next, we show in Section \ref{section synthesis} that Wasserstein gradient flows for two-layer neural network training may experience a curse of dimensionality phenomenon. The analytical result is backed up by numerical evidence in Section \ref{section simulation}. We conclude the article by discussing the significance of our result and related open problems in Section \ref{section discussion}. In an appendix, we show that a similar phenomenon can be established when training an infinitely wide random feature model on a single neuron target function.

\subsection{Previous Work}

The study of mean field training for neural networks with a single hidden layer has been initiated independently in several works \cite{chizat2018global,rotskoff2018neural,sirignano2018mean,mei2018mean}. In \cite{chizat2018note}, the authors compare mean field and classical training. \cite{chizat2018global, Chizat:2020aa, arbel2019maximum} contain an analysis of whether gradient flows starting at a suitable initial condition converge to their global minimum. This analysis is extended to networks with ReLU activation in \cite{relutraining}.

In \cite{hu2019mean}, the authors consider a training algorithm where standard Gaussian noise is added to the parameter gradient of the risk functional. The evolution of network parameters is described by the Wasserstein gradient flow of an energy functional which combines the loss functional and an entropy regularization. In this case, the parameter distribution approaches the stationary measure of a Markov process as time approaches infinity, which is close to a minimizer of the mean field risk functional if noise is small. Note, however, that these results do not describe the small batch stochastic gradient descent algorithm used in practice, for which noise may be assumed to be Gaussian, but with a complicated parameter-dependent covariance structure \cite{hu2019diffusion,li2015dynamics}.

Some results in \cite{chizat2018global} also apply to deeper structures with more than one hidden layer. However, the imposition of a linear structure implies that each neuron in the outer layer has its own set of parameters for the deeper layers. A mean field training theory for more realistic deep networks has been developed heuristically in \cite{nguyen2019mean} and rigorously in \cite{araujo2019mean,nguyen2020rigorous, sirignano2019mean} under the assumption that the parameters in different layers are initialized independently. The distribution of parameters remains a product measure for positive time, so that cross-interactions with infinitely many particles in the following layer (as width approaches infinity) are replaced by ensemble averages. This `propagation of chaos' is the key ingredient of the analysis.

In \cite{abbe2018provable}, the author takes a different approach to establish limitations of neural network models in machine learning, see also \cite{shamir2018distribution,raz2018fast}. Our approach is different in that we allow networks of infinite width and infinite amounts of data.

\section{Background}\label{section wasserstein}

\subsection{Why Wasserstein?}

Let us quickly summarize the rationale behind studying Wasserstein gradient flows of risk functionals. This section only serves as rough overview, see \cite{Chizat:2020aa} for a more thorough introduction to Wasserstein gradient flows for machine learning and \cite{ambrosio2008gradient,santambrogio2015optimal,villani2008optimal} for Wasserstein gradient flows and optimal transport in general.

Consider a general function class $\mathcal F$ whose elements can be represented as normalized sums 
\[
f_{\{\theta_1,\dots,\theta_m\}}(x) = \frac1m \sum_{i=1}^m \phi(x,\theta_i)\quad \text{or more generally averages}\quad f_\pi(x) = \int_\Theta \phi(\theta, x) \,\pi(\d\theta).
\]
of functions in a parameterized family $\{\phi(\cdot,\theta)\}_{\theta \in \Theta}$.
In the case of two-layer networks, $\theta = (a,w,b)$ and $\phi(\theta,x) = a\,\sigma(w^Tx+b)$. If the activation function is $\sigma(z) = \ReLU(z) = \max\{z,0\}$, then $\phi(\lambda \theta, x) = \lambda^2\phi(\theta,x)$ for all $\lambda>0$. Thus $f_\pi$ is well-defined if $\pi$ has finite second moments, i.e.\ $\pi$ lies in the Wasserstein space $\mathcal P_2$. We consider the risk functional
\[
\Risk(\pi) = \frac12\int_{\R^d} \big(f_\pi - f^*\big)^2(x)\,\mathbb P(\d x)
\]
for some data distribution $\mathbb P$ on $\R^d$. Note that $\inf_\pi \Risk(\pi) = 0$ if $\spt(\mathbb P)$ is compact and the class $\{f_\pi |\pi \in \mathcal P_2\}$ has the uniform approximation property on compact sets  (by which we mean that the class is dense in $C^0(K)$ for all $K\cc\R^d$). This is the case for two-layer networks with non-polynomial activation functions -- see e.g.\ \cite{cybenko1989approximation, hornik1991approximation} for continuous sigmoidal activation functions. The same result holds for ReLU activation since $z\mapsto \mathrm{ReLU}(z+1) - \mathrm{ReLU}(z)$ is sigmoidal.

\begin{lemma}\cite[Proposition B.1]{chizat2018global}\label{lemma why wasserstein}
The parameters $\Theta = (\theta_i)_{i=1}^m$ evolve by the time-accelerated gradient flow
\[
\frac{d}{dt}\theta_i(t) = - m\,\nabla_{\theta_i}\Risk(\Theta_t) = - \int \big(f_\Theta- f^*\big)(x)\,\nabla_\theta\phi(\theta_i, x)\,\P(\d x)
\]
of $\Risk$ if and only if their distribution
$
\pi^m_{t} = \frac1m\sum_{i=1}^m \delta_{\theta_i(t)}
$
evolves by the Wasserstein gradient flow
\[
\dot\pi_t = \operatorname{div}\left(\pi_t \,\nabla_{\theta} \frac{\delta\Risk}{\delta\pi}(\pi_t;\cdot)\right)
\qquad
\text{where}
\quad
\frac{\delta\Risk}{\delta\pi}(\pi; \theta) = \int \big(f_\pi-f^*\big)(x)\,\phi(\theta, x)\,\mathbb P(\d x).
\]
\end{lemma}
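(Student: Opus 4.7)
The plan is to compute both sides explicitly and reconcile them via the method of characteristics for the continuity equation; the only subtlety is book-keeping of the factor $m$, which is exactly why a time-accelerated flow is needed.

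First I would compute the first variation of $\Risk$. Since $f_\pi(x) = \int \phi(\theta,x)\,\pi(\d\theta)$ depends linearly on $\pi$, the expansion
\[
\Risk(\pi + \varepsilon \mu) - \Risk(\pi) = \varepsilon \int (f_\pi - f^*)(x)\, f_\mu(x)\,\P(\d x) + O(\varepsilon^2)
\]
combined with Fubini's theorem gives $\frac{\delta \Risk}{\delta \pi}(\pi; \theta) = \int (f_\pi - f^*)(x)\,\phi(\theta,x)\,\P(\d x)$, as claimed. Differentiating in $\theta$ yields the driving velocity field $v_t(\theta) := -\nabla_\theta \frac{\delta\Risk}{\delta\pi}(\pi_t;\theta)$ of the Wasserstein flow.

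Next I would differentiate $\Risk(\Theta)$ directly. Because $f_\Theta(x) = \frac1m \sum_j \phi(\theta_j,x)$, one has $\nabla_{\theta_i} f_\Theta(x) = \frac1m \nabla_\theta \phi(\theta_i,x)$ and therefore
\[
m\,\nabla_{\theta_i}\Risk(\Theta) = \int (f_\Theta - f^*)(x)\,\nabla_\theta \phi(\theta_i,x)\,\P(\d x),
\]
which is precisely the second equality in the statement. The factor $m$ compensates exactly the $\tfrac1m$ prefactor in $f_\Theta$ and is the sole reason the gradient flow must be time-accelerated to match the Wasserstein flow.

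Finally I would establish the equivalence. For the empirical measure $\pi_t^m = \frac1m \sum_i \delta_{\theta_i(t)}$ one has $f_{\pi_t^m} = f_{\Theta_t}$, so evaluating $v_t$ at $\theta_i(t)$ gives exactly $-m\,\nabla_{\theta_i}\Risk(\Theta_t)$. Hence the parameter ODE says $\dot \theta_i(t) = v_t(\theta_i(t))$. Testing $\pi_t^m$ against an arbitrary smooth $\psi \in C_c^\infty$ and differentiating in $t$ yields
\[
\frac{\d}{\d t}\int \psi\,\d\pi_t^m = \frac1m \sum_i \nabla \psi(\theta_i(t))\cdot \dot\theta_i(t) = \int \nabla\psi \cdot v_t\,\d\pi_t^m,
\]
which is the weak formulation of $\dot \pi_t = \operatorname{div}(\pi_t\,\nabla_\theta \frac{\delta\Risk}{\delta\pi}(\pi_t;\cdot))$. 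Conversely, given that $\pi_t^m$ satisfies this continuity equation with atomic initial data, uniqueness of the characteristic ODE for the locally Lipschitz vector field $v_t$ forces the atoms to move by the parameter ODE, completing the equivalence. The main (mild) obstacle is simply verifying the regularity of $v_t$ needed to legitimately invoke characteristics, which for Lipschitz activation and smooth enough $\phi$ is standard.
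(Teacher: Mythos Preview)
Your argument is correct and is precisely the standard computation: identify the first variation, track the factor $m$, and recognize the weak form of the continuity equation as the chain rule applied to $\psi(\theta_i(t))$. Note, however, that the paper does not supply its own proof of this lemma at all; it merely cites \cite[Proposition B.1]{chizat2018global} and remarks that the continuity equation is to be understood distributionally. So there is nothing to compare your approach against within this paper---you have simply filled in the details the authors deferred to the reference. The only point worth flagging is the converse direction: your appeal to uniqueness of characteristics for the continuity equation with atomic initial data is exactly the right idea, but be aware that for ReLU activation $\phi$ is not $C^1$ in $\theta$, so the Lipschitz regularity of $v_t$ is not quite ``standard''; this is why the paper separately cites \cite{relutraining} for the ReLU case.
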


The continuity equation describing the gradient flow is understood in the sense of distributions. By the equivalence in Lemma \ref{lemma why wasserstein}, all results below apply to networks with finitely many neurons as well as infinitely wide mean field networks. In this article, we do not concern ourselves with existence for the gradient flow equations. More details can be found in \cite{chizat2018global} for general activation functions with a higher degree of smoothness and in \cite{relutraining} for ReLU activation.

\subsection{Growth of Second Moments}
Denote the second moment of $\pi$ by
$
N(\pi) := \int|\theta|^2\,\pi(\d\theta).
$
A direct calculation establishes that $\frac{d}{dt} \sqrt{N(\pi_t)} \leq \left|\frac{d}{dt}\,\Risk(\pi_t)\right|^{1/2}$, which implies the following.

\begin{lemma}\label{lemma sublinear growth}\cite[Lemma 3.3]{relutraining}
If $\pi_t$ evolves by the Wasserstein-gradient flow of $\Risk$, then
$
\lim_{t\to \infty} \frac{N(\pi_t)}t = 0.
$
\end{lemma}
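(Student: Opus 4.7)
The plan is to combine the energy dissipation identity of the gradient flow with the Cauchy--Schwarz inequality to obtain the stated derivative bound, then split an integral to extract the sublinear decay. The hint in the paper reduces the work to (a) justifying the pointwise-in-time inequality and (b) upgrading it from an estimate of $\sqrt{N(\pi_t)}$ by $\sqrt{t}\cdot \sqrt{\Risk(\pi_0)}$ (which only gives boundedness) to genuine sublinearity.

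First I would compute both $\frac{d}{dt}N(\pi_t)$ and $\frac{d}{dt}\Risk(\pi_t)$ using the continuity equation from Lemma \ref{lemma why wasserstein}. Denoting $V_t(\theta) = \frac{\delta\Risk}{\delta\pi}(\pi_t;\theta)$, testing the continuity equation against $|\theta|^2$ yields
\[
\frac{d}{dt} N(\pi_t) \;=\; -2\int \theta\cdot \nabla V_t(\theta)\,\pi_t(\d\theta),
\]
while testing against $V_t$ itself (the standard dissipation identity for Wasserstein gradient flows) gives
\[
\frac{d}{dt}\Risk(\pi_t) \;=\; -\int |\nabla V_t(\theta)|^2\,\pi_t(\d\theta) \;\le\; 0.
\]
Cauchy--Schwarz applied to the first identity produces $|\tfrac{d}{dt}N(\pi_t)| \le 2\sqrt{N(\pi_t)}\,\sqrt{-\tfrac{d}{dt}\Risk(\pi_t)}$, which rearranges to the claimed bound $\tfrac{d}{dt}\sqrt{N(\pi_t)} \le |\tfrac{d}{dt}\Risk(\pi_t)|^{1/2}$.

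Next I would integrate in time to get $\sqrt{N(\pi_t)} \le \sqrt{N(\pi_0)} + \int_0^t \sqrt{-\tfrac{d}{ds}\Risk(\pi_s)}\,\d s$ and use the crucial fact that $\Risk(\pi_s)$ is nonincreasing and nonnegative, so the total dissipation $\int_0^\infty (-\tfrac{d}{ds}\Risk(\pi_s))\,\d s \le \Risk(\pi_0)$ is finite. Given $\varepsilon > 0$, choose $T$ so large that the tail dissipation $\int_T^\infty(-\tfrac{d}{ds}\Risk(\pi_s))\,\d s < \varepsilon$. Split $\int_0^t = \int_0^T + \int_T^t$; the first piece is a fixed constant, while Cauchy--Schwarz on the second piece gives
\[
\int_T^t \sqrt{-\tfrac{d}{ds}\Risk(\pi_s)}\,\d s \;\le\; \sqrt{t-T}\,\sqrt{\,\Risk(\pi_T) - \Risk(\pi_t)\,} \;\le\; \sqrt{t}\,\sqrt{\varepsilon}.
\]
Dividing through by $\sqrt{t}$ yields $\limsup_{t\to\infty} \sqrt{N(\pi_t)/t} \le \sqrt{\varepsilon}$; since $\varepsilon$ was arbitrary, $N(\pi_t)/t \to 0$.

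The only genuine technical point is justifying the chain-rule manipulations in the distributional setting. For smooth $\sigma$ this is standard (see Chizat--Bach), and for ReLU one would appeal to the weaker notion developed in the companion work on ReLU training cited as [relutraining]; since the paper explicitly says it does not concern itself with existence issues, I would simply state these as admissible tools rather than reprove them. The conceptual content of the lemma is the pairing between second-moment growth and energy dissipation, and the nontrivial step (beyond the hint) is the splitting argument that converts the $\sqrt{t}$-bound on $\sqrt{N(\pi_t)}$ into the $o(\sqrt{t})$ bound needed for sublinear second-moment growth.
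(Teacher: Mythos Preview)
Your proposal is correct and follows exactly the approach the paper indicates: derive the pointwise bound $\frac{d}{dt}\sqrt{N(\pi_t)} \le \big|\frac{d}{dt}\Risk(\pi_t)\big|^{1/2}$ via the continuity equation and Cauchy--Schwarz, then integrate. The paper itself only records the derivative inequality and defers the rest to the cited reference; your splitting argument (fixed $T$ to make the tail dissipation $<\varepsilon$, then Cauchy--Schwarz on $[T,t]$) is precisely the standard way to pass from the finite total dissipation $\int_0^\infty(-\tfrac{d}{ds}\Risk)\,\d s\le \Risk(\pi_0)$ to the $o(\sqrt t)$ conclusion.
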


\begin{remark}\label{remark decay rate}
If $\Risk(\pi_t)$ is a priori known to decrease at a specific rate, a stronger result holds. Under the fairly restrictive assumption that
\[
\Risk(\pi_t) - \Risk(\pi_{t+1}) \leq C\,t^{-(1+\alpha)},\qquad\text{the estimate}\quad \sqrt{N(\pi_t)} \leq \begin{cases} C\big(1 + t^{\frac{1-\alpha}2}\big) &\alpha>1\\ C\,\log(t+2) &\alpha = 1\end{cases}.
\]
holds.
In particular, if $\Risk(\pi_t) \sim \frac1t$ decays like in the convex case, the most natural decay assumption on the derivative is
$\Risk(\pi_t) - \Risk(\pi_{t+1})  \approx \left|\frac{d}{dt}\,\Risk(\pi_t)\right| \sim t^{-2}
$
which corresponds to $\alpha=1$. Thus, in this case we expect the second moments of $\pi_t$ to blow up at most logarithmically, which agrees with the results of \cite{berlyand2018convergence}.
\end{remark}

\subsection{Slow Approximation Results in High Dimension}

In this section, we recall a result from high-dimensional approximation theory. An infinitely wide two-layer network is a function
\[
f_\pi(x) = \int_{\R^{d+2}} a\,\sigma(w^Tx+b)\,\pi(\d a\otimes \d w \otimes \d b).
\]
The choice of the parameter distribution $\pi$ for $f = f_\pi$ is non-unique since $f_\pi =0$ for all measures $\pi$ which are invariant under the coordinate reflection $T(a,w,b) = (-a, w, b)$. For ReLU activation, further non-uniqueness stems from the fact that
\[
0 = x+1 - x -1 = \sigma(x+1) - \sigma\big(- (x+1)\big) - \sigma(x) + \sigma(-x) - \sigma(1).
\]
The path-norm or Barron norm of a function $f$ is the norm which measures the amount of distortion done to an input along any path which information takes through the network. Due to the non-uniqueness, it is defined as an infimum
\[
\|f\|_{\B} := \inf\left\{ \int |a|\,\big[|w|_{\ell^1} + |b|\big] \,\pi(\d a\otimes \d w \otimes \d b)\:\bigg|\: \pi\in \mathcal P_2 \text{ s.t. }f = f_\pi\right\}.
\]
The equality $f=f_\pi$ is understood in the $\P$-almost everywhere sense for the data distribution $\P$. A more thorough introduction can be found in \cite{weinan2019lei,E:2018ab} or \cite{bach2017breaking}, where a special instance of the same space is referred to as $\F_1$.
Every ReLU-Barron function $f$ is $\|f\|_\B$-Lipschitz. In high dimensions, the opposite is far from true.

\begin{theorem}\cite[Corollary 3.4]{approximationarticle}\label{theorem l2 slow}
Let $d\in\N$. There exists $\phi:[0,1]^d \to [0,1]$ such that 
\[
|\phi(x)-\phi(y)|\leq |x-y|\quad\forall\ x, y\in [0,1]^d\qquad\text{and}\qquad\limsup_{t\to \infty} \left[t^\gamma \, \inf_{\|\psi\|_\B \leq t} \|\phi - \psi \|_{L^2([0,1]^d)}\right] = \infty
\]
for all $\gamma > \frac{2}{d-2}$.
\end{theorem}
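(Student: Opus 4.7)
The plan is a metric-entropy comparison: we contrast the richness of the class $X$ of $1$-Lipschitz $[0,1]$-valued functions on $[0,1]^d$ against the much smaller class of functions well-approximated by Barron balls $K_t = \{\psi : \|\psi\|_\B \leq t\}$. The bridge is a Maurey-type sparsification: for every $\psi = f_\pi$ with $\|\psi\|_\B \leq t$ there is an $m$-neuron approximation $\psi_m$ of path norm $\leq t$ with $\|\psi - \psi_m\|_{L^2([0,1]^d)} \leq C\,t/\sqrt{m}$ (sample $m$ neurons i.i.d.\ from $|a|(|w|_{\ell^1}+|b|)\,\pi / t$ and reweight). Choosing $m \asymp t^2/\varepsilon^2$ thus produces an $L^2$-cover of $K_t$ at scale $\varepsilon$ by the set $N^t_m$ of $m$-neuron networks of path norm $\leq t$.

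Next I estimate entropies on both sides. By $1$-homogeneity of ReLU one may assume each neuron satisfies $|w_i|_{\ell^1}+|b_i|=1$, so $N^t_m$ is parametrized by a point of the unit $\ell^1$-sphere in $\R^{d+1}$ for each of $m$ neurons together with amplitudes $a \in \R^m$ obeying $\sum_i|a_i| \leq t$, and the map from parameters to network values is Lipschitz uniformly for $x \in [0,1]^d$. A volume count then yields
\[
\log\mathcal{N}(K_t, L^2, \varepsilon) \leq C(d+2)(t/\varepsilon)^2\log(t/\varepsilon).
\]
For the lower bound, the Kolmogorov--Tikhomirov construction (partition $[0,1]^d$ into $\sim\delta^{-d}$ cubes of side $\delta$, place an independently signed tent of height $\delta/4$ on each, apply Chernoff-based deletion) supplies a $\delta$-packing of $X$ in $L^2$ of cardinality at least $\exp(c_d\delta^{-d})$. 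Moreover, inside any sup-norm ball $B_\alpha(\phi_0) \cap X$, the convex-combination map $h \mapsto (1-\alpha)\phi_0 + \alpha h$ embeds $X$ isometrically up to a factor $\alpha$, giving $\log\mathcal{N}(B_\alpha(\phi_0)\cap X, L^2, \varepsilon) \geq c_d(\alpha/\varepsilon)^d$.

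To produce a single $\phi$ satisfying $\limsup_{t\to\infty} t^\gamma \rho(t,\phi) = \infty$ for every $\gamma > 2/(d-2)$, where $\rho(t,\phi) := \inf_{\|\psi\|_\B\leq t}\|\phi-\psi\|_{L^2}$, I apply Baire category in the complete metric space $(X, \|\cdot\|_\infty)$. The sets $F_{n,k,\gamma} := \{\phi \in X : \rho(t,\phi)\leq n\,t^{-\gamma}\text{ for all } t\geq k\}$ are closed because $\phi \mapsto \rho(t,\phi)$ is $1$-Lipschitz in sup norm. If some $F_{n,k,\gamma}$ with $\gamma > 2/(d-2)$ contained a ball $B_\alpha(\phi_0) \cap X$, the two entropy estimates together would force $c_d(\alpha/\varepsilon)^d \leq C(d+2)(n/\varepsilon)^{2+2/\gamma}\log(\varepsilon^{-1})$, impossible as $\varepsilon \to 0$ since $d > 2 + 2/\gamma$. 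Hence each $F_{n,k,\gamma}$ is nowhere dense; intersecting complements over $(n,k,\gamma) \in \N \times \N \times (\mathbb{Q} \cap (2/(d-2),\infty))$ produces a dense $G_\delta$ subset of $X$, any member of which is a valid $\phi$.

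The main obstacle is the upper entropy bound: one must be sure the Maurey sparsification in Step 1 preserves the path norm (not just the $L^2$-approximation), so that the subsequent parameter-counting cover has the correct exponent $2+2/\gamma$ in $\varepsilon^{-1}$; otherwise the final comparison degrades and one does not land at the sharp threshold $2/(d-2)$. The lower entropy bound, its propagation into every sup-norm ball via convex combinations, and the Baire bookkeeping are routine once the exponent comparison is in place.
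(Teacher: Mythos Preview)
Your argument is sound and the exponent bookkeeping is correct: the Maurey sparsification indeed preserves the path norm (sampling from $|a|(|w|_{\ell^1}+|b|)\,\pi/t$ and assigning each sampled neuron weight $t\cdot\mathrm{sgn}(a)/m$ keeps $\sum_i|a_i|(|w_i|_{\ell^1}+|b_i|)\le t$), the parameter count then gives $\log\mathcal N(K_t,L^2,\varepsilon)\lesssim (d+2)(t/\varepsilon)^2\log(t/\varepsilon)$, and substituting $t\asymp(n/\varepsilon)^{1/\gamma}$ produces an $\varepsilon^{-(2+2/\gamma)}$ upper bound that loses to the Kolmogorov--Tikhomirov lower bound $\varepsilon^{-d}$ precisely when $\gamma>2/(d-2)$. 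The convex-interpolation trick for propagating the entropy lower bound into every sup-ball of $X$ and the Baire argument are clean.

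The paper itself does not prove this statement but cites it from \cite{approximationarticle}, and the one-line description it gives points to a \emph{different} mechanism: uniform Monte-Carlo integration rates. For Barron balls $K_t$ one has $\sup_{\psi\in K_t}\big|\int\psi\,\d\mathcal L^d - \frac1N\sum_j\psi(x_j)\big|\lesssim t/\sqrt N$ by a Rademacher complexity bound, whereas over the $1$-Lipschitz ball the worst-case Monte-Carlo error is of order $W_1(\mathcal L^d,\frac1N\sum_j\delta_{x_j})\gtrsim N^{-1/d}$. This rate gap forces the existence of Lipschitz functions that cannot be approximated in $L^2$ by $K_t$ faster than $t^{-2/(d-2)}$. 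Compared with your approach, the Monte-Carlo route is more constructive and probabilistic (one can in principle exhibit bad $\phi$'s via random bump superpositions tied to sample points), while your entropy/Baire argument is more abstract but delivers a stronger qualitative conclusion for free: the set of such $\phi$ is a dense $G_\delta$ in $(X,\|\cdot\|_\infty)$, so the phenomenon is generic rather than exceptional. Both routes ultimately exploit the same capacity separation between Lipschitz and Barron classes; they just measure that separation through different statistics.
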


This means that
\[
\inf_{\|\psi\|_\B \leq t_k} \|\phi - \psi \|_{L^2([0,1]^d)} \geq t_k^{-\gamma} \quad \text{for all } \gamma> \frac{2}{d-2}
\]
and a sequence of scales $t_k\to \infty$, i.e.\ in high dimension there are Lipschitz functions which are poorly approximated by Barron functions of low norm.
The proof of Theorem is built on the observation that Monte-Carlo integration converges uniformly on Lipschitz-functions and Barron functions with very different rates, suggesting a scale separation.

\section{A Dynamic Curse of Dimensionality}\label{section synthesis}

\begin{proof}[Proof of Theorem \ref{main theorem 2}]
The path-norm of a two-layer neural network is
\begin{align*}
\|f\|_{\B} &= \inf\left\{ \int |a|\,\big[|w|_{\ell^1} + |b|\big] \,\pi(\d a\otimes \d w \otimes \d b)\:\bigg|\: \pi\in \mathcal P_2 \text{ s.t. }f = f_\pi\right\}\\
	&\leq c_d \int |a|^2 + |w|_{\ell^2}^2 + |b|^2\,\bar\pi(\d a\otimes \d w \otimes \d b)
	\qquad= c_d\, N(\bar\pi)
\end{align*}
for {\em any} $\bar\pi$ such that $f_{\bar\pi} = f$. The dimensional constant $c_d= 6 + 4\sqrt{d}$ arises as we apply Young's inequality and invoke the equivalence of the Euclidean norm and the $\ell^1$-norm on $\R^d$. The result now follows from Lemma \ref{lemma sublinear growth} and Theorem \ref{theorem l2 slow}.
\end{proof}

\begin{remark}
The result can be improved under additional assumptions. Like in Remark \ref{remark decay rate}, we assume that the difference quotients of risk satisfy
$
\Risk(\pi_t) - \Risk(\pi_{t+1}) \leq C\,t^{-(\alpha+1)}
$ 
for $\alpha>1$. Then
$
\Risk(\pi_t) = \lim_{s\to\infty}\Risk(\pi_s) + O(t^{-\alpha})\qquad\text{and}\quad 
N(\pi_t) \leq C\,t^{1-\alpha}
$
grows noticeably slower than linearly. If $f^*$ is such that
$
\Risk(\pi_t) \geq C\,N(\pi_t)^{-\beta},
$
and $\Risk(\pi_t)$ decays to zero, we find that
$
\alpha \leq \beta\,(1-\alpha)$, so $\alpha \leq \frac\beta{1+\beta} <\beta$.
\end{remark}

\section{Numerical Results}\label{section simulation}
%
%

\begin{figure}
\includegraphics[width = 0.9\textwidth]{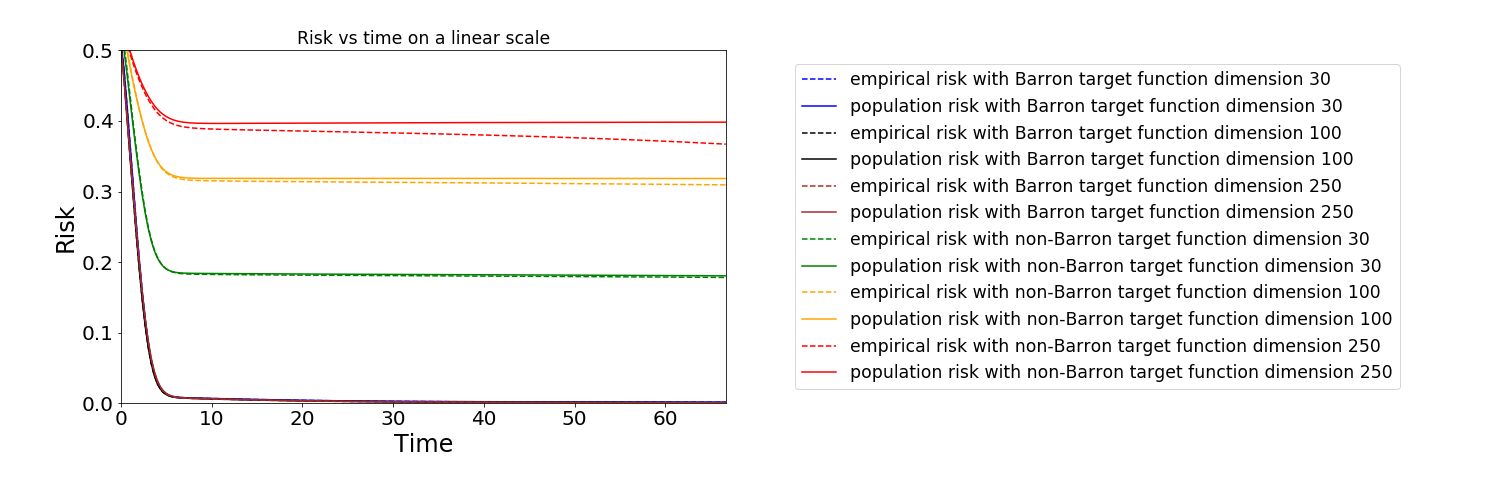}
\caption{\label{figure risk log}A shallow neural network is trained to approximate Barron and non-Barron target functions in moderately high dimension. Plots for Barron target correspond to the colors starting with `b' (blue, black, brown). Their risk decays so similarly across different dimensions that the plots are virtually indistinguishable. For non-Barron target functions, the decay of risk becomes noticeably slower in higher dimensions. Both empirical and population risk are monotone decreasing in all simulations.}
\end{figure}

\begin{figure}
\includegraphics[width = 0.9\textwidth]{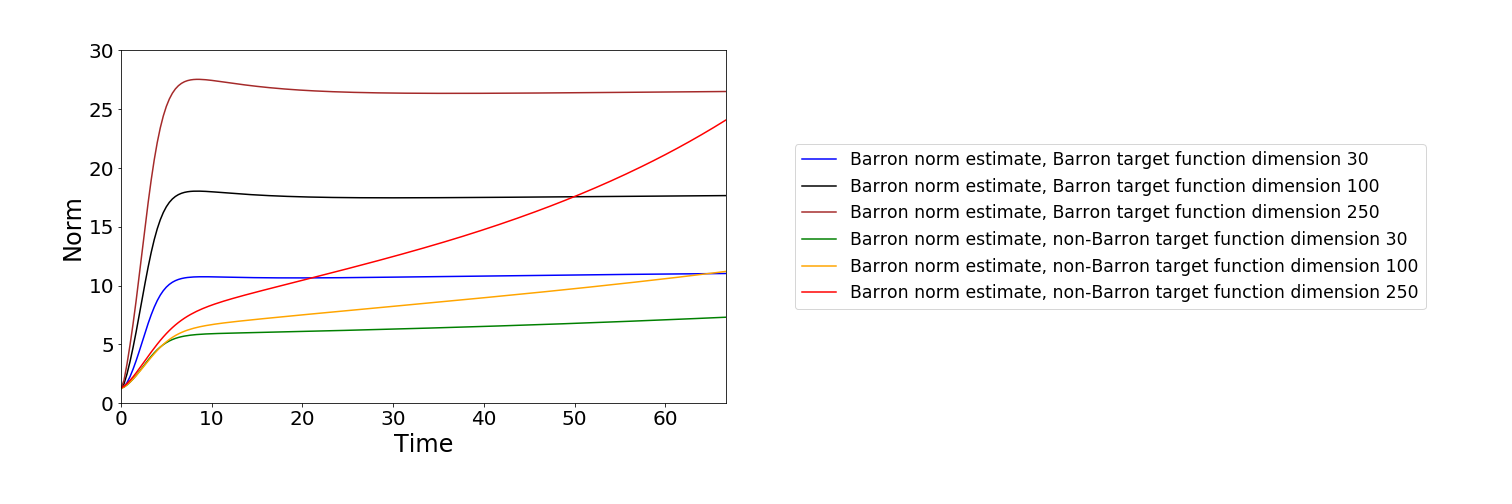}
\caption{\label{figure norm}Network parameters are initialized such that the Barron norm at time $t=0$ is comparable for different dimensions. The Barron norm increases slowly for non-Barron target functions, which may seem counter-intuitive. Recall however that $\frac{d}{dt} \sqrt{N(\pi_t)} \leq \left|\frac{d}{dt}\,\Risk(\pi_t)\right|^{1/2}$, i.e.\ the second moments of the parameter distribution (which bound the Barron norm) can only increase significantly if large amounts of risk are dissipated. Since the decay of risk is slow, also the growth of the Barron norm is slow. However, it is not expected to level off for large times like it does in the Barron case.}
\end{figure}

For $\Theta = \{(a_i, w_i, b_i)\}_{i=1}^m$, we consider the associated two-layer network with ReLU activation
\[
f_\Theta(x) = \frac1m\sum_{i=1}^m a_i\,\sigma(w_i^Tx+b_i) = \frac1m \sum_{i=1}^m a_i\,\big(w_i^Tx+b_i\big)_+.
\] 
As risk functional we choose 
\[
\Risk(\Theta) = \frac12 \fint_{[-1,1]^d} \big|f_\Theta(x) - f^*(x)\big|^2\,\dx \:\:= 2^{-(d+1)}\,\int_{[-1,1]^d} \big|f_\Theta(x) - f^*(x)\big|^2\,\dx.
\]
The target function $f^*$ in our simulations is either
\[
f^*(x) = \sqrt{\frac 32}\big[\|x-a\|_{\ell^2} - \|x + a\|_{\ell^2}\big], \qquad a_i = \frac {2i}d -1
\]
as an example of a Barron function (which can be represented with $a\equiv 1$, $b\equiv 0$ and $w$ distributed uniformly on a sphere of radius $\sim d^{1/2}$, but not with finitely many neurons)  or 
\[
f^*(x) = \sqrt{\frac d\pi}\,\left[\max_{1\leq i \leq d} (x_i - a_i) - \max_{1\leq i\leq d} (-x_i-a_i)\right]
\]
as an example of a Lipschitz continuous, non-Barron target function. In both cases, we have
\[
\int_{[-1,1]^d} f^*(x) \dx = 0, \qquad \|f^*\|_{L^2\big([0,1]^d\big)} \approx 1, \qquad [f^*]_{\mathrm{Lip}} \leq \sqrt{6d}.
\]
For a proof that $f^*$ is not a Barron function, see \cite{barron_new}. In the first case, also the Barron norm of $f^*$ scales as $\sqrt d$. The offset from the origin is used to avoid spurious effects since the initial parameter distribution is symmetric around the origin. 
In simulations, we considered moderately wide networks with $m=1,500$ neurons. The parameters were initialized iid according to Gaussians with expectation $0$ and variance $1$ for $a_i$, $\frac{2}{d+1}I$ for $w_i$, and as constants $b_i = \frac1{2(d+1)}$.
They were optimized by (non-stochastic) gradient descent for an empirical risk functional
\[
\Risk_n(\Theta) = \frac12 \sum_{j=1}^n \big(f_\Theta(x_j) - f^*(x_j)\big)^2
\]
with $n=20,000$ independent samples $x_j\sim U\big([-1,1]^d\big)$. Population risk was approximated by an empirical risk functional evaluated on $N = 100,000$ independent samples. On the data samples, the mean and variance of the target functions were estimated in the range $[-0.013, 0.013]$ and $[1, 1.09]$ respectively for all simulations.

%
%

In Figure \ref{figure risk log} we see that both empirical and population risk decay very similarly for Barron target functions in any dimension, while the decay of risk becomes significantly slower in high dimension for target functions which are not Barron. The empirical decay rate 
\[
\gamma(t) := - \frac{\log(\Risk(\Theta_t))}{\log t}
\qquad\big(\text{which satisfies }
\Risk(\Theta_t) = t^{-\gamma(t)}\big)
\]
becomes smaller for fixed positive time and non-Barron target functions as $d\to \infty$, see Figure \ref{figure risk quotient}. 

Training appears to proceed in two regimes for Barron target functions: An initial phase in which both the Barron norm and risk change rapidly, and a longer phase in which the risk decays gradually and the Barron norm remains roughly constant. In the initial `radial' phase, the vector $(a,w,b)$ is subject to a strong radial force driving the parameters towards the origin or away from the origin exponentially fast. Since $\sigma =$ ReLU is positively $1$-homogeneous, we observe that
\begin{align*}
\frac{(a_i, w_i, b_i)}{\|(a_i, w_i, b_i)\|} \cdot \nabla_{(a_i, w_i, b_i)} \Risk(\Theta) &= \fint_{[-1,1]^d} \big(f_\Theta - f^*(x)\big)\, \frac{(a_i, w_i, b_i)}{\|(a_i, w_i, b_i)\|} \cdot \nabla_{(a_i, w_i, b_i)} f_\Theta(x)\dx\\
	&= \frac1m \fint_{[-1,1]^d} \big(f_\Theta - f^*(x)\big)\,\frac{a\,\sigma(w_i^Tx+b_i)}{\|(a,w,b)\|}\,\dx
\end{align*}
with a positively one-homogeneous right hand side. Thus while $f_\Theta$ is close to its initialization ($\approx 0$ due to symmetry in $a$), the vector $(a,w,b)$ moves towards the origin/away from the origin at an exponential rate, depending on the alignment of $a\sigma(w^T\cdot +b)$ with $f^*$. The exponential growth ceases as $f_\Theta$ becomes sufficiently close to $f^*$ (in the $L^2$-weak topology). 

After the initial strengthening of neurons which are  generally aligned with the target function, we reach a more stable state. In the following `angular' phase, the Barron norm remains constant and directional adjustments to parameters dominate over radial adjustments. Using Figures \ref{figure risk log} and \ref{figure norm}, we can easily spot the transition between the two training regimes at time $t\approx 7$.


\begin{figure}
\includegraphics[width = 0.9\textwidth]{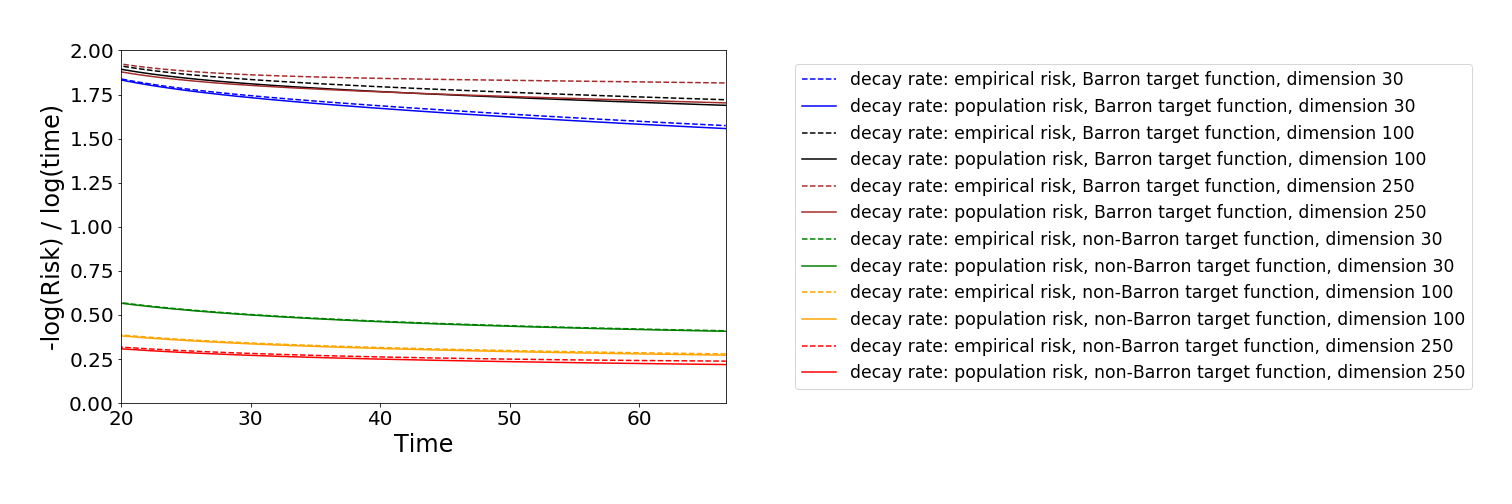}
\caption{\label{figure risk quotient} The empirical decay rate $\gamma(t):= - {\log(\Risk(\Theta_t)}\,/\,{\log t}$ is comparable (potentially even increasing in dimension) for Barron target functions, but gets smaller for non-Barron target functions as the dimension increases. The sample size for empirical risk is sufficiently large that empirical risk and population risk remain similar throughout the evolution.}
\end{figure}


The gap between empirical risk and population risk increases in high dimensions. When training the same networks on the same problems for empirical risk with only 4,000 data points, the results are very similar in dimension 30, but the empirical risk decays very quickly in dimension 250 while the population risk increases rather than decrease when considering a non-Barron target function. This is to be expected since a) the Wasserstein distance between Lebesgue measure and empirical measure increases and b) the number of trainable parameters $m(d+2)$ increases with $d$, making it easier to fit $n$ point values. The risk decays approximately like $t^{-\gamma}$ for Barron target functions with $\gamma\geq 1.5$ (faster in higher dimension). This is faster than expected for generic convex target functions.

\section{Discussion}\label{section discussion}

In this article, we have shown that in the mean field regime, training a two-layer neural network on empirical or population risk may not decrease population risk faster than $t^{-\gamma}$ for $\gamma>4/(d-2)$ when
 the data distribution is truly $d$-dimensional, 
 we consider $L^2$-loss, and
 the target function is merely Lipschitz-continuous, but not in Barron space.
The key ingredient of the result is the slow growth of path norms during gradient flow training and the observation that a Lipschitz function $\phi$ exists which is badly approximable in high dimension.

It is straight-forward to extend the main result to general least-squares minimization.
All statements remain true if instead of `risk decays to zero' we substitute `risk decays to minimum Bayes risk'.

\subsection{Interpretation}

The curse of dimensionality phenomenon occurs when the target function is not in Barron space, i.e.\ a minimizer does not exist. In this situation, even gradient flows of smooth convex functions in one dimension may be slow. The gradient flow ODE
\[
\begin{pde}\dot x_\alpha(t) &= - F_\alpha'(x_\alpha(t)) &t>0\\
	x_\alpha(t) &=1 &t=0\end{pde}
\quad\text{of} \quad F_\alpha:(0,\infty)\to\R,\quad F_\alpha(x) = x^{-\alpha}
\]
is solved by
$
x_\alpha(t)= \big(1+\alpha(\alpha+2)t\big)^\frac1{\alpha+2}.
$
The energy decays as
$
F_\alpha(x_\alpha(t)) \sim t^{-\frac\alpha{\alpha+2}}.
$
If $\alpha \ll 1$, the energy decay is extremely slow. Thus, it should be expected that curse of dimensionality phenomena can occur whenever the risk functional does not have a minimizer in the function space associated with the neural network model under consideration. The numerical evidence of Section \ref{section simulation} suggests that the slow decay phenomenon is visible also in empirical risk if the training sample is large enough (depending on the dimension).

\subsection{Implications for Machine Learning Theory}

Understanding function spaces associated to neural network architectures is of great practical importance. When a minimization problem does not admit a solution in a given function space, gradient descent training may be very slow in high dimension. Unlike the theory of function spaces typically used in low-dimensional problems of elasticity theory, fluid mechanics etc, no comprehensive theory of Banach spaces of neural networks is available except for very special cases \cite{E:2019aa,weinan2019lei}. 
In the light of our result, a convergence proof for mean field gradient descent training of two-layer neural networks must satisfy one of two criteria: It must assume the existence of a minimizer, or it must allow for slow convergence rates in high dimension. 

%
%


\newpage 

\appendix

\section{Random Features and Shallow Neural Networks}

\begin{figure}
\includegraphics[width = 0.85\textwidth]{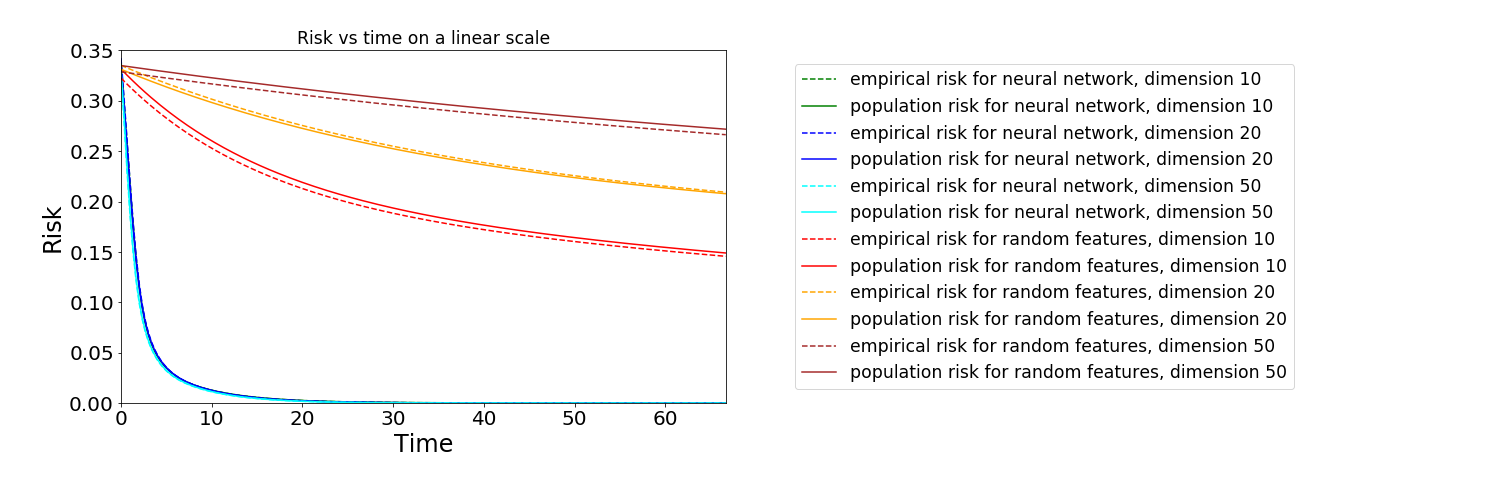}
\caption{\label{figure risk linear rkhs} The risk of two-layer networks when trained to approximate a single neuron activation decays in a largely dimension-independent fashion while the risk of random feature models decays slowly in higher dimension.}
\end{figure}

Lemma \ref{lemma sublinear growth} applies to general models with an underlying linear structure, in particular random feature models. Both two-layer neural networks and random feature models have the form $
f(x) = \frac1m \sum_{i=1}^m a_i\,\sigma(w_i^Tx+b_i),
$
but in random feature models, $w_i, b_i$ is fixed at the (random) initialization. 
An infinitely wide random feature model is described by
\[
f(x) = \int_{\R^{d+1}} a(w,b)\,\sigma(w^Tx+b)\,\pi^0(\d w \otimes \d b)
\]
where $\pi^0$ is a fixed distribution (usually spherical or standard Gaussian) while an infinitely wide two-layer neural network is described by 
\[
f(x) = \int_{\R^{d+1}} a\,\sigma(w^Tx+b)\,\pi (\d a\otimes \d w \otimes \d b).
\]
\cite[Example 4.3]{approximationarticle} establishes a Kolmogorov-width type separation between random feature models and two-layer neural networks of similar form as the separation between two-layer neural networks and Lipschitz functions. Thus a curse of dimensionality also affects the training of infinitely wide random feature models when the target function is a generic Barron function. If $\pi_0$ is a smooth omni-directional distribution and $f^*(x) = \sigma(x_1)$ is a single neuron activation, then $a$ must concentrate a large amount of mass, forcing $\|a\|_{L^2(\pi_0)}$ to blow up. In higher dimension, the blow-up is more pronounced since small balls on the sphere around $e_1$ have faster decaying volume. 

We train a two-layer neural network and a random feature model with gradient descent to approximate the single neuron activation in $L^2([-1,1])$. Both models have width $m=1,500$. Empirical risk is calculated using $10,000$ independent data samples and population risk is approximated using $50,000$ data samples. Both networks are intialized according to a Gaussian distribution as above.

\begin{figure}
\includegraphics[width = 0.85\textwidth]{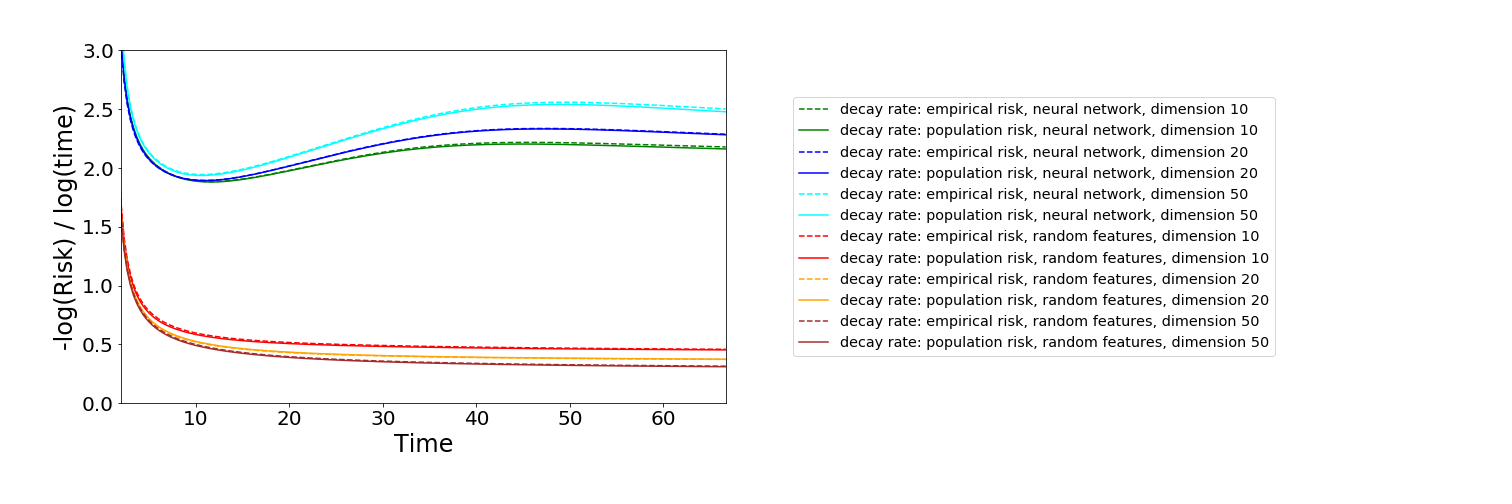}
\caption{\label{figure risk quotient rkhs} Gradient descent optimizes neural networks at a rate of approximately $t^{-2}$ when the target function is a single neuron activation. When a random feature model is used, risk decay is much slower in high dimension. Again, we observe that neural network training appears to work better in high dimension. Empirical and population risk remain close, so we do not attribute this to overfitting.}
\end{figure}

\end{document}